\documentclass[letterpaper, 10 pt, conference]{ieeeconf}  % Comment this line out if you need a4paper
\usepackage[T1]{fontenc}
\usepackage{amsthm} % for definitions, theorems, etc.
\usepackage{amsmath,bm}
\usepackage{amsfonts} 
\usepackage{color}
\usepackage{makecell}
\usepackage{graphicx} 

\usepackage{subfig}
\usepackage{multirow}
\usepackage{booktabs}
\usepackage{mathtools}
\usepackage{cite}
\usepackage{url}
\usepackage{hyperref}

\newtheorem{lemma}{Lemma}
\newtheorem{definition}{Definition}

\newtheorem{proposition}{Proposition}
\newtheorem{problem}{Problem}

\newtheorem{assumption}{Assumption}

\usepackage{booktabs}
\usepackage{multirow}
\usepackage[table]{xcolor}
\usepackage{arydshln}
\usepackage{makecell}

\definecolor{good}{RGB}{0,120,0}
\definecolor{bad}{RGB}{0,0,0}

\definecolor{optcolor}{HTML}{4E9F9A}
\definecolor{rlcolor}{HTML}{C9A227}
\definecolor{hybridcolor}{HTML}{D97959}

\newcommand{\meanstd}[2]{#1 {\tiny$\pm$ #2}}

\definecolor{bestred}{RGB}{200,40,40}
\definecolor{stripegray}{RGB}{242,242,242}
\newcommand{\best}[1]{\textcolor{good}{\textbf{#1}}}
\newcommand{\worst}[1]{\textcolor{bad}{\text{#1}}}

\newcommand{\shadefour}{\cellcolor{stripegray}}
\newcommand{\noshade}{}

\usepackage{acro}

\DeclareAcronym{cvar}{
  short = CVaR,
  long  = Conditional Value-at-Risk
}

\DeclareAcronym{cbf}{
  short = CBF,
  long  = Control Barrier Function
}

\DeclareAcronym{cvarbf}{
  short = CVaR-BF ,
  long  = CVaR barrier function
}
\DeclareAcronym{cvarbfqp}{
  short = CVaR-BF-QP ,
  long  = CVaR-BF quadratic program
}
\DeclareAcronym{qp}{
  short = QP ,
  long  = quadratic program
}
\DeclareAcronym{mpc}{
  short = MPC ,
  long  = model predictive control
}
\DeclareAcronym{gmm}{
  short = GMM ,
  long  = Gaussian mixture model
}
\DeclareAcronym{ood}{
  short = OOD ,
  long  = out-of-distribution
}
\DeclareAcronym{sr}{
  short = SR ,
  long  = success rate
}
\DeclareAcronym{cr}{
  short = CR ,
  long  = collision rate
}
\DeclareAcronym{rl}{
  short = RL,
  long  = reinforcement learning
}

\IEEEoverridecommandlockouts                              
\DeclareCaptionFont{mysize}{\fontsize{8}{9.6}\selectfont}
\def\BibTeX{{\rm B\kern-.05em{\sc i\kern-.025em b}\kern-.08em
    T\kern-.1667em\lower.7ex\hbox{E}\kern-.125emX}}

\title{\LARGE \bf
Reinforcement Learning for Risk Adaptation via Differentiable CVaR Barrier Functions
}

\author{Xinyi Wang$^{1}$, Taekyung Kim$^{1}$, Bardh Hoxha$^{2}$, Georgios Fainekos$^{2}$ and Dimitra Panagou$^{1,3}$
\thanks{$^{1}$Department of Robotics, $^{3}$Department of Aerospace Engineering, University of Michigan, Ann Arbor, MI, 48109, USA {\tt\footnotesize \{taekyung, xinywa, dpanagou\}@umich.edu} } 
\thanks{$^{2}$Toyota Motor North America, Research \& Development, Ann Arbor, MI, 48105, USA {\tt\footnotesize <first\_name.last\_name>@toyota.com} }}
\begin{document}

\maketitle
\thispagestyle{empty}
\pagestyle{empty}

\begin{abstract}
Planning through crowded environments under uncertain obstacle motions remains difficult, as stochastic interactions often induce overly conservative behavior or reduced efficiency. To address this challenge, we propose an end-to-end risk adaptation framework for crowd navigation under obstacle-motion uncertainty modeled by a Gaussian mixture model. The framework combines reinforcement learning~(RL) with a differentiable quadratic-program safety layer based on Conditional Value-at-Risk~(CVaR) barrier functions, jointly learning nominal control input, risk level, and safety margin and enforcing explicit probabilistic safety constraints. 
This design enables context-aware adaptation, promoting efficient behavior while invoking caution only when necessary.
We conduct extensive evaluations in dynamic, uncertain, and crowded environments across varying obstacle densities and robot models, and further assess generalization under three out-of-distribution cases. 
Comparisons across optimization-based, RL-based, and integrated RL and optimization methods are provided, and the proposed method is shown to deliver the strongest overall performance in safety, efficiency, and generalization under uncertainty. \href{https://anonymousrobotics9666.github.io/rlcvarbf/}{[Paper Page]}\,
% \href{xxx}{[Video]}\,
\href{https://github.com/anonymousrobotics9666/RL_Adaptive_CVaRBF.git}{[Code]}
\end{abstract}

% \begin{IEEEkeywords}
% safe reinforcement learning, control barrier functions, conditional value-at-risk.
% \end{IEEEkeywords}

\section{Introduction}
Safe and efficient robot navigation in crowded environments under uncertain obstacle motion remains challenging. A key difficulty is achieving high efficiency while maintaining safety without excessive conservatism. 
Risk-neutral objectives optimize expected cost but may overlook dangerous tail events~\cite{black2024risk}, whereas risk-averse robust optimization methods account for worst-case scenarios over distributions~\cite{xu2024distributionally} potentially at the price of excessive conservatism.

Recent work has adopted~\ac{cvar}, a tail-sensitive risk measure that provides a tunable balance between safety and efficiency~\cite{safaoui2024distributionally,ryu2024integrating, kim2025learning}.  Motivated by the deterministic guarantees of~\acp{cbf}, several works combine~\acp{cbf} with~\ac{cvar} to enforce probabilistic safety under uncertainty~\cite{kishida2024risk, chang2025risk, ahmadi2021risk}. 
However,~\ac{cvarbf} methods typically use a fixed risk level throughout the trajectory, limiting adaptability in dense crowds: conservative settings can cause overly cautious behavior or even infeasibility, whereas aggressive settings improve progress at the expense of safety. 
Prior work partially addresses this limitation by adapting the risk level online within the~\ac{cvarbf} framework~\cite{wang2025safe}. 
While promising, this approach relies on heuristics, and its performance in more complex settings, such as constrained dynamical robot models and multiple obstacle behavioral modes, remains unclear.
These limitations motivate a learning-based method, since adaptive risk parameters are difficult to design analytically across diverse stochastic crowd interactions.

Recently, \ac{rl} has shown advantages for adaptive decision making, since context-dependent control strategies can be learned directly from interaction data~\cite{choi2021risk, zhu2025confidence}. In addition, \ac{rl} has demonstrated strong performance by learning diverse and socially compliant behaviors in highly dynamic environments~\cite{liu2022intention, yao2024sonic, han2025dr}.
However, in safety-critical settings,~\ac{rl} policies typically provide only empirical safety and can degrade under distribution shift, such as denser crowds or previously unseen behaviors. 
\begin{figure}
  \centering
  \includegraphics[width=0.9\linewidth]{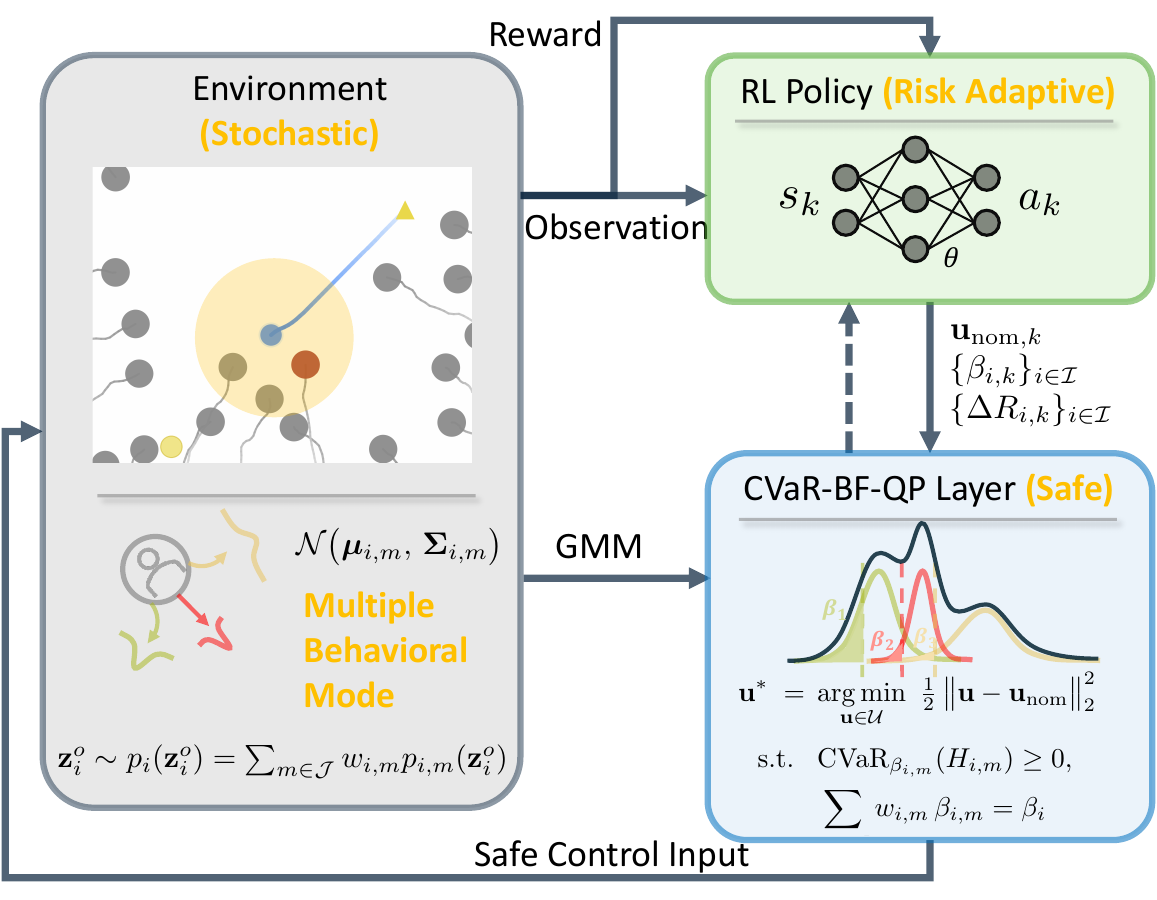}
\caption{Overview of the proposed end-to-end risk adaptation framework. Obstacle uncertainty is modeled as a GMM, and its predictions are passed to a differentiable CVaR-BF-QP layer. An~\ac{rl} policy jointly learns the nominal control input $\mathbf{u}_\textup{nom}$, risk level $\beta$, and safety margin $\Delta R$, which are then fed into the optimization layer to generate safe control actions.}
\label{fig:overview}
\vspace{-15pt}
\end{figure}
To address this limitation, several mechanisms have been proposed to incorporate formal safety guarantees into~\ac{rl}~\cite{brunke2022safe}. 
A common approach is to use external safety modules, such as CBF-based safety filters that modify nominal policy outputs to satisfy safety constraints~\cite{gao2023online}, or model-predictive shielding methods that verify policy actions using predictive models and replace unsafe actions when necessary~\cite{tkbackup2026, banerjee2024dynamic}.
These methods can improve safety at inference time, but because the safety module is not jointly optimized with the pre-trained RL policy, the resulting closed-loop behavior can remain suboptimal. 
Instead, we adopt a more integrated alternative, namely a differentiable optimization layer that enables gradients to pass through this safety layer during end-to-end learning~\cite{amos2017optnet}. 
Closely related work such as~\cite{xiao2023barriernet, emam2021safe, wang2025multi} adopts a differentiable CBF layer, but primarily targets deterministic settings and does not explicitly model stochastic uncertainty in dynamic environments.

To address these gaps, we propose an end-to-end framework for safe navigation that integrates learning with a differentiable safety layer under uncertain obstacle motion. The main contributions are as follows:
\begin{itemize}
  \item An end-to-end risk-adaptive framework in which an~\ac{rl} policy jointly learns nominal control input, risk level, and safety margin for context-aware adaptation.
  \item A differentiable~\ac{cvarbfqp} safety layer under~\ac{gmm} obstacle-motion uncertainty, yielding tractable~\ac{qp} constraints with explicit probabilistic safety guarantees. To the best of our knowledge, this is the first tractable QP reformulation of a CVaR-based control barrier function under Gaussian-mixture uncertainty.
  \item A broad comparative study of optimization-based, \ac{rl}-based, and integrated~\ac{rl} and optimization methods in challenging dynamic crowd environments across robot models, obstacle densities, and out-of-distribution cases, showing that the proposed method achieves the strongest overall performance in safety, efficiency, robustness, and generalization.
\end{itemize}

\section{Related Work}
% This section reviews prior work most relevant to our approach from three perspectives: robust control and optimization, learning-based methods, and planning under uncertainty. 
\subsection{Robust Optimization}
Robust optimization provides a standard foundation for handling uncertainty and enforcing safety in dynamic environments. 
For example,~\cite{safaoui2024distributionally} developed a sample-based, distributionally robust~\ac{cvar} safety filter that constructs safe half-spaces from predicted obstacle trajectories for uncertain obstacle avoidance. \cite{ryu2024integrating} further incorporated predictive motion uncertainty into a distributionally robust, chance-constrained~\ac{mpc} framework for safe robot navigation in crowds.~\ac{cvar} has also been incorporated into~\ac{cbf} frameworks. For example,~\cite{kishida2024risk} integrated Kalman filtering and worst-case~\ac{cvar} to derive risk-aware safety constraints for discrete-time stochastic systems. More recently,~\cite{chang2025risk} combined cooperative sensing with a~\ac{cvarbfqp} safety filter for dynamic obstacle avoidance. 
These methods provide strong risk-aware safety guarantees, but they typically rely on fixed risk specifications and can become conservative in dense, highly dynamic interactions.

To mitigate this conservatism,~\cite{wang2025safe} introduced a risk-adaptive~\ac{cvarbf} that adjusts risk tolerance online and handles general uncertainty via sampling-based~\ac{cvar}. 
However, this formulation has two limitations. First, sampling-based \ac{cvar} calculation can be computationally demanding. Closed-form \ac{cvar} formulations address this issue by improving computational efficiency~\cite{safaoui2024distributionally, kishida2024risk}, but they often rely on Gaussian motion assumptions that do not capture multiple behavioral modes. 
Second, heuristic risk adaptation can be difficult to tune in scenarios involving more complex interactions and multiple obstacle behavioral modes. 
Learning-based methods address this issue by adapting risk sensitivity from interaction data~\cite{choi2021risk}, but they typically provide only empirical safety rather than explicit formal guarantees.
These limitations motivate an adaptive~\ac{cvarbf} method that is both computationally tractable and capable of providing real-time probabilistic safety under~\ac{gmm} uncertainty.

\subsection{Safe Reinforcement Learning with Optimization}
\ac{rl} has become a prominent approach for learning risk-adaptive and socially compliant navigation behaviors in dynamic crowds~\cite{zhu2025confidence,liu2022intention,yao2024sonic}. Prior work has explored attention-based graph representations for modeling crowd interactions~\cite{liu2022intention,yao2024sonic}, adaptive conformal inference with constrained~\ac{rl} for socially aware navigation~\cite{yao2024sonic}, and confidence-aware robust dynamical-distance constrained~\ac{rl} for uncertain interactions~\cite{zhu2025confidence}. Despite strong empirical performance, these methods generally do not provide explicit probabilistic safety guarantees.

To improve safety, some approaches augment learned policies with runtime safety filters. For example, policy actions can be filtered through a~\ac{cbf}-\ac{qp} controller~\cite{gao2023online} or safeguarded by~\ac{mpc}-based shielding that verifies and replaces unsafe actions when necessary~\cite{tkbackup2026,banerjee2024dynamic}. Although these mechanisms can improve deployment-time safety, the safety layer is typically not jointly optimized with the pre-trained RL policy, which can lead to conservative or suboptimal closed-loop behavior.
A more integrated direction is to embed constrained optimization directly into the learning pipeline through differentiable layers~\cite{amos2017optnet}. 
Prior work leverages differentiable \ac{cbf}-\ac{qp} layers to facilitate gradient propagation during training~\cite{xiao2023barriernet,emam2021safe,wang2025multi}. Nevertheless, these methods rely on deterministic assumptions and do not explicitly incorporate uncertainty in an end-to-end framework.

\section{Preliminaries and Problem Formulation}
\subsection{Continuous-Time Control Barrier Functions}
Consider the continuous-time control-affine system:
\begin{equation}
\begin{aligned}
\label{eq:sys}
  \dot{\mathbf{x}}(t) &= f\big(\mathbf{x}(t)\big) + g\big(\mathbf{x}(t)\big)\mathbf{u}(t),
\end{aligned}
\end{equation}
where $\mathbf{u} \in \mathcal{U} \subset \mathbb{R}^{n_u}$ is the admissible control input and $\mathbf{x} \in \mathcal{X} \subseteq \mathbb{R}^{n_r}$ denotes the robot state. 
For notational simplicity, when the time dependence is clear from context, we write $\mathbf{x}$ and $\mathbf{u}$ in place of $\mathbf{x}(t)$ and $\mathbf{u}(t)$.
The functions $f : \mathcal{X} \to \mathbb{R}^{n_r}$ and $g:\mathcal{X}\to\mathbb{R}^{n_r\times n_u}$ are both assumed to be locally Lipschitz continuous.
Let $h: \mathbb{R}^{n_r} \to \mathbb{R}$ be a continuously differentiable function. The safe set is defined as the superlevel set
\begin{equation}
\label{eq:safeset}
\mathcal{X}_\textup{safe} = \{ \mathbf{x} \in \mathbb{R}^{n_r} \mid h(\mathbf{x}) \ge 0 \}.  
\end{equation}
A set $\mathcal{X}_\textup{safe} \subset \mathbb{R}^{n_r}$ is forward invariant for \eqref{eq:sys} under a control input $\mathbf{u}(t) \in \mathcal{U}$ if its solutions starting at any $\mathbf{x}(0) \in \mathcal{X}_\textup{safe}$ satisfy $\mathbf{x}(t) \in \mathcal{X}_\textup{safe}, \forall t \ge 0$.

\begin{definition}[Control Barrier Function~\cite{ames2019cbf}]
\label{def:cbf}
Let $\dot{h} = L_f h(\mathbf{x})+L_g h(\mathbf{x})\mathbf{u}$, 
where $L_f h(\mathbf{x})=\nabla h(\mathbf{x})^\top f(\mathbf{x})$ and $L_g h(\mathbf{x})=\nabla h(\mathbf{x})^\top g(\mathbf{x})$ are the Lie derivatives of the function $h$ with respect to $f$ and $g$.
Given the set $\mathcal{X}_\textup{safe}$ in \eqref{eq:safeset}, a continuously differentiable function $h:\mathbb{R}^{n_r}\to\mathbb{R}$ is a CBF for System~\eqref{eq:sys} if there exists an extended class-$\mathcal{K}_{\infty}$ function $\alpha$ such that
\begin{equation}
\sup_{\mathbf{u}\in\mathcal{U}}\left[L_f h(\mathbf{x})+L_g h(\mathbf{x})\mathbf{u}\right] \ge -\alpha\big(h(\mathbf{x})\big),
\quad \forall \mathbf{x}\in\mathcal{X}_\textup{safe}.
\end{equation}

\end{definition}

Following the~\ac{cbf} guarantee in \cite{ames2019cbf}, we define the admissible safe-control set $\mathcal{K}_{\textup{CBF}}(\mathbf{x}) := \{\mathbf{u}\in\mathcal{U} \mid L_f h(\mathbf{x}) + L_g h(\mathbf{x})\mathbf{u} + \alpha(h(\mathbf{x})) \ge 0\}$. If $h$ is a CBF and a locally Lipschitz controller $\kappa(\mathbf{x})$ satisfies $\kappa(\mathbf{x}) \in \mathcal{K}_{\textup{CBF}}(\mathbf{x})$ for all $\mathbf{x}\in\mathcal{X}_\textup{safe}$, then the set $\mathcal{X}_\textup{safe}$ is forward invariant.

\subsection{Probabilistic Constraints}
In dynamic-obstacle scenarios, due to the inherent uncertainty of obstacle motion, the safe set in \eqref{eq:safeset} cannot be computed deterministically. Let $\mathcal{I}\coloneqq\{1,\dots,N_{\textup{obs}}\}$ denote the set of obstacle indices, where $N_{\textup{obs}}$ is the number of obstacles observed. For each $i\in\mathcal{I}$, let $\mathbf{x}^{o}_i\in\mathbb{R}^{n_{o}}$ denote the state of obstacle $i$. For each obstacle, 
we use a continuously differentiable barrier function $h_i:\mathbb{R}^{n_r}\times\mathbb{R}^{n_{o}}\to\mathbb{R}$ and define the safe set as
\begin{equation}
\label{eq:s}
\mathcal{C}_i=\{\mathbf{x}\in\mathbb{R}^{n_r} \mid h_i(\mathbf{x},\mathbf{x}^{o}_i)\ge 0\}.
\end{equation}
The multi-obstacle safe set is the intersection $\mathcal{C}=\bigcap_{i\in\mathcal{I}}\mathcal{C}_i$.
In the common case where $h_i$ is constructed from a robot-obstacle distance, it depends on an adjustable safety-margin offset $\Delta R_i\ge 0$ that inflates the keep-out distance.
Following previous work~\cite{salzmann2020trajectron}, we assume $\mathbf{x}^{o}_i$ can be observed at the current time, while future obstacle motions are predicted under uncertainty, i.e.,
\begin{equation}
\dot{\mathbf{x}}^{o}_i=f^{o}_i\big(\mathbf{x}^{o}_i\big)+\mathbf{z}^{o}_i,
\end{equation}
where $f_i^o:\mathbb{R}^{n_o}\to\mathbb{R}^{n_o}$ 
denotes a prediction model of obstacle $i$, 
% and $\{\mathbf{z}^{o}_i \}_{t\ge 0}$ is a stochastic process capturing uncertain obstacle motion. 
% Equivalently, 
and for each fixed $t$, $\mathbf{z}^{o}_i$ is a random variable following an estimated distribution $\mathcal{D}_i$. 
In this work, we instantiate $\mathcal{D}_i$ as a~\ac{gmm} to capture multiple possible obstacle-motion modes. Let $\mathcal{J}\coloneqq\{1,\dots,M\}$ denote the set of \ac{gmm} mode indices:
\begin{equation}
\begin{aligned}
\label{eq:gmm}
\mathbf{z}^{o}_i\sim p_i(\mathbf{z}^{o}_i)
=\sum_{m\in\mathcal{J}} w_{i,m} p_{i,m}(\mathbf{z}^{o}_i),
\sum_{m\in\mathcal{J}}w_{i,m}=1.
\end{aligned}
\end{equation}
where each mode $p_{i,m}(\mathbf{z}^{o}_i)$ is a unimodal Gaussian distribution $\mathcal{N}(\bm{\mu}_{i,m},\bm{\Sigma}_{i,m})$ with mean vector $\bm{\mu}_{i,m}$ and covariance matrix $\bm{\Sigma}_{i,m}$ induced by the $m$-th Gaussian component. 

Taking the total time derivative of $h_i(\mathbf{x},\mathbf{x}^{o}_i)$ gives
\begin{equation}
\label{eq:h_dot}
\dot h_i(\mathbf{x},\mathbf{x}^{o}_i,\mathbf{z}^{o}_i)
=\nabla_{\mathbf{x}}h_i\big(\mathbf{x},\mathbf{x}^{o}_i\big)^\top\dot{\mathbf{x}}
+\nabla_{\mathbf{x}^{o}_i}h_i\big(\mathbf{x},\mathbf{x}^{o}_i\big)^\top\dot{\mathbf{x}}^{o}_i.
\end{equation}
Thus, at each time $t$, $\dot h_i$ is random and the~\ac{cbf} condition should be formulated as a probabilistic constraint.
% Assume $\alpha(h_i(\mathbf{x},\mathbf{x}^{o}_i))  = \alpha h_i(\mathbf{x},\mathbf{x}^{o}_i) $ and 
Let
\begin{equation}
\label{eq:H}
H_i \coloneqq \dot h_i(\mathbf{x},\mathbf{x}^{o}_i, \mathbf{z}^{o}_i) +\alpha(h_i(\mathbf{x},\mathbf{x}^{o}_i)).
\end{equation}
We next introduce the probabilistic~\ac{cbf} condition used to enforce safety under obstacle-motion uncertainty. 
\begin{definition}[Probabilistic~\ac{cbf}]
\label{def:pcbf}
Let $\beta_i \in (0,1)$ be the allowable violation probability for obstacle $i$. The collection of functions $\{h_i\}_{i\in\mathcal{I}}$ is a $\beta_i$-probabilistic \ac{cbf} if there exists an extended class-$\mathcal{K}$ function $\alpha(\cdot)$ such that for each $\mathbf{x} \in \mathcal{C}$ there exists $\mathbf{u} \in \mathbb{R}^{n_u}$ satisfying
\begin{equation}
\label{eq:pcbf}
% \mathbb{P}\left( \dot{h}_i(\mathbf{x},\mathbf{x}^{o}_i,\mathbf{z}^{o}_i) \ge -\alpha(h_i(\mathbf{x},\mathbf{x}^{o}_i)) \right) \ge 1-\beta_i,\quad \forall i\in\mathcal{I},
\mathbb{P}\left(H_i \ge 0 \right) \ge 1-\beta_i,\quad \forall i\in\mathcal{I},
\end{equation}
where $\mathbf{z}^{o}_i \sim \mathcal{D}_i$ and $H_i$ is defined in \eqref{eq:H}.
\end{definition}
% According to Definition~\ref{def:pcbf}, we have 
% \begin{equation}
% \label{eq:pcbf}
% \mathbb{P}\bigl(H_i \ge 0\bigr) \ge 1-\beta_i, \quad \forall i\in\mathcal{I},\ \forall t \geq 0.
% \end{equation}
For notational simplicity, the following definitions are stated for a generic random variable $H$. In the multi-obstacle setting, they are applied to each $H_i$. Following~\cite{sarykalin2008value}, the associated Value-at-Risk (VaR) of $H$ is
\begin{equation}
\operatorname{VaR}_{\beta}(H)=\sup_{\zeta\in\mathbb{R}}\left\{\zeta\mid \mathbb{P}\bigl(H \ge \zeta\bigr)\ge 1-\beta\right\}.
\end{equation}
The~\ac{cvar} is then defined as follows:
\begin{definition}[Conditional Value-at-Risk~\cite{sarykalin2008value}]
\label{def:cvar}
The expected tail value of $H$ below the threshold 
$\operatorname{VaR}_{\beta}(H)$ is
\begin{equation}
\label{eq:cvar_ct}
\operatorname{CVaR}_{\beta}(H) \coloneqq \mathbb{E}\!\left[H \mid H \le \operatorname{VaR}_{\beta}(H)\right].
\end{equation}
\end{definition}
An equivalent optimization form
is given in \cite{sarykalin2008value}:
\begin{equation}
\label{eq:cvar_opt_ct}
\operatorname{CVaR}_{\beta}(H)=-\inf_{\zeta\in\mathbb{R}}\mathbb{E}\!\left[\zeta+\frac{(-H -\zeta)_+}{\beta}\right],
\end{equation}
where $(\cdot)_+=\max\{\cdot,0\}$. 
Note that $\beta\to 1$ corresponds to the risk-neutral case, i.e., $\operatorname{CVaR}_{\beta}(H)\to\mathbb{E}[H]$; whereas $\beta\to 0$ corresponds to the risk-averse case, i.e., $\operatorname{CVaR}_{\beta}(H)\to \operatorname{VaR}_{\beta}(H)$ \cite{akella2024risk}.
Compared with VaR,~\ac{cvar} satisfies the axioms of a coherent risk measure, which are essential for rational risk assessment \cite{majumdar2019should}. 
% It is also convex~\cite{sarykalin2008value}, which is important for formulating tractable convex optimization problems. 
It is also convex~\cite{sarykalin2008value}, which is important in our proposed method, as it enables a tractable convex optimization formulation.
The connection between the probabilistic constraint and \ac{cvar} constraint is~\cite{wang2025safe}:
\begin{equation}
\begin{aligned}
\label{eq:psafe}
% & 
\operatorname{CVaR}_{\beta}(H) \geq 0 \ \Rightarrow 
% \\
% &\quad \operatorname{VaR}_{\beta}(H) \geq 0 \ \Leftrightarrow 
\ \mathbb{P} (H \geq 0) \geq 1 - \beta. 
\end{aligned}
\end{equation}

\subsection{Problem Formulation}
% Building on the probabilistic~\ac{cbf} condition  。above, 
We now formalize the finite-horizon navigation objective: to synthesize a control policy that maintains probabilistic safety under stochastic obstacle motion with multiple behavioral modes while efficiently reaching the navigation goal.
\begin{problem}
\label{prob:finite_horizon_navigation}
Given the initial state $\mathbf{x}(0)$, system dynamics \eqref{eq:sys}, obstacle uncertainty distributions $\{\mathcal{D}_i\}_{i\in\mathcal{I}}$, a time horizon $T>0$, and a target risk $\epsilon\in(0,1)$, find a task-efficient control policy satisfying
\begin{equation}
\label{eq:prob_safety}
\mathbb{P}\Big( \mathbf{x}(t) \in \mathcal{C},\ \forall t \in [0,T] \Big) \ge 1-\epsilon,
\end{equation}
where $\mathbf{x}(t)\in\mathcal{C}$ means $h_i(\mathbf{x}(t),\mathbf{x}_i^o(t))\ge0$ for all $i\in\mathcal{I}$.
\end{problem}
Directly enforcing the condition in~\eqref{eq:prob_safety} over the continuous horizon $[0,T]$ is intractable. 
Next, we introduce our method, which uses the probabilistic~\ac{cbf} condition in~\eqref{eq:pcbf} and reformulates these chance constraints into tractable mode-wise~\ac{cvar} constraints under the \ac{gmm} uncertainty model. Proposition~\ref{prop:finite_horizon_sampling} then shows that enforcing these constraints with a proper risk level guarantees~\eqref{eq:prob_safety}.

% Directly enforcing the chance constraint~\eqref{eq:prob_safety} over the continuous horizon $[0,T]$ is intractable. In the next section, we introduce a tractable approach to enforce the probabilistic~\ac{cbf} condition in~\eqref{eq:pcbf} that provides safety guarantees over the finite horizon.

% Proposition~\ref{prop:finite_horizon_sampling} will show that enforcing the probabilistic~\ac{cbf} condition in~\eqref{eq:pcbf} with a proper risk level is sufficient to guarantee the safety requirement in~\eqref{eq:prob_safety}.

\section{Proposed Method}
This section presents the proposed risk-adaptive control method for Problem~\ref{prob:finite_horizon_navigation}. Using the \ac{gmm} introduced in~\eqref{eq:gmm}, the resulting chance constraints are reformulated as a~\ac{cvarbfqp} with probabilistic safety guarantees. This~\ac{cvarbfqp} is enforced through a differentiable optimization layer, enabling safe control and end-to-end policy learning.

\subsection{\ac{cvar} Constraint under~\ac{gmm}}
% Let $\mathcal{J}\coloneqq\{1,\dots,M\}$ denote the set of \ac{gmm} mode indices. For each obstacle $i$, we formulate the random obstacle-motion variable $\mathbf{z}^{o}_i$ as a~\ac{gmm} with $M$ modes:
% \begin{equation}
% \begin{aligned}
% \label{eq:gmm}
%  \mathbf{z}^{o}_i\sim p_i(\mathbf{z}^{o}_i)=\sum_{m\in\mathcal{J}} w_{i,m} p_{i,m}(\mathbf{z}^{o}_i),\, \sum_{m\in\mathcal{J}}w_{i,m}=1.
% \end{aligned}
% \end{equation}
% where each mode $p_{i,m}(\mathbf{z}^{o}_i)$ is a unimodal Gaussian distribution $\mathcal{N}(\bm{\mu}_{i,m},\bm{\Sigma}_{i,m})$ with mean vector $\bm{\mu}_{i,m}$ and covariance matrix $\bm{\Sigma}_{i,m}$ induced by the $m$-th Gaussian component. 
Since $\dot h_i(\mathbf{x},\mathbf{x}^{o}_i,\mathbf{z}^{o}_i)$ in \eqref{eq:h_dot} is affine in $\mathbf{z}^{o}_i$, the $H_i$ in \eqref{eq:H} is also Gaussian under each GMM mode, i.e.,
\begin{equation}
\begin{aligned}
\label{eq:H_mode}
H_i\sim p_i(H_i) = \sum_{m\in\mathcal{J}} w_{i,m} p_{i,m}(H_i),\, \sum_{m\in\mathcal{J}}w_{i,m}=1,
\end{aligned}
\end{equation}
where each mode $p_{i,m}(H_i)$ is also a scalar Gaussian distribution $\mathcal{N}(\mu_{i,m}^H,(\sigma_{i,m}^H)^2)$ with mean and variance
\begin{equation}
\begin{split}
\label{eq:H_mode_mu_cov}
&\mu_{i,m}^H =
\nabla_{\mathbf{x}} h_i(\mathbf{x},\mathbf{x}^{o}_i)^\top
\big(f(\mathbf{x})+g(\mathbf{x})\mathbf{u}\big)
+ \alpha h_i(\mathbf{x},\mathbf{x}^{o}_i)\\
&\quad
+ \nabla_{\mathbf{x}^{o}_i}h_i(\mathbf{x},\mathbf{x}^{o}_i)^\top
\big(f^{o}_i(\mathbf{x}^{o}_i)+\bm{\mu}_{i,m}\big),\\
&(\sigma_{i,m}^H)^2 =
\nabla_{\mathbf{x}^{o}_i}h_i(\mathbf{x},\mathbf{x}^{o}_i)^\top
\bm{\Sigma}_{i,m}
\nabla_{\mathbf{x}^{o}_i}h_i(\mathbf{x},\mathbf{x}^{o}_i).
\end{split}
\end{equation}

% $\mu_k^H$ and $(\sigma_k^H)^2$ are the mode-wise mean and variance of $H$ induced by the $k$-th Gaussian component.

For each obstacle's~\ac{gmm}, Lemma~\ref{lem:modewise_chance} gives a sufficient mode-wise condition for the chance constraint~\cite{ren2022chance}.
\begin{lemma}[Mode-Wise Chance Constraint\cite{ren2022chance}]
\label{lem:modewise_chance}
Suppose that, for each mode $m\in\mathcal{J}$ of obstacle $i$, the mode-wise chance constraint
\begin{equation}
\mathbb{P}_{p_{i,m}}(H_{i,m}\ge 0)\ge 1-\beta_{i,m},\quad \forall m\in\mathcal{J},
\label{eq:gmm_mode_chance}
\end{equation}
holds, where the risk allocation variables satisfy
\begin{equation}
\sum_{m\in\mathcal{J}} w_{i,m}\beta_{i,m} = \beta_i.
% \quad \forall i\in \mathcal{I}
\label{eq:gmm_risk_budget}
\end{equation}
Then, the mixture chance constraint in \eqref{eq:pcbf} holds.
\end{lemma}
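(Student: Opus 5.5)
The plan is to use the law of total probability over the latent mode of the mixture. I will write the Gaussian mixture \eqref{eq:H_mode} as a two-stage sampling model. First introduce a latent discrete variable $K_i\in\mathcal{J}$ with $\mathbb{P}(K_i=m)=w_{i,m}$. Then, conditional on $K_i=m$, let $\mathbf{z}^{o}_i\sim p_{i,m}$. The marginal density of $\mathbf{z}^{o}_i$ is then exactly $p_i$ in \eqref{eq:gmm}. Because $H_i$ is an affine (hence measurable) function of $\mathbf{z}^{o}_i$ for fixed $(\mathbf{x},\mathbf{u},\mathbf{x}^{o}_i)$, the conditional law of $H_i$ given $K_i=m$ is the mode-$m$ component $p_{i,m}(H_i)$, which is the law of $H_{i,m}$ in \eqref{eq:H_mode_mu_cov}.

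The key step is the decomposition
\begin{equation*}
\mathbb{P}(H_i\ge 0)=\int \mathbf{1}\{H_i\ge0\}\,p_i(H_i)\,dH_i=\sum_{m\in\mathcal{J}} w_{i,m}\,\mathbb{P}_{p_{i,m}}(H_{i,m}\ge 0).
\end{equation*}
This follows directly from linearity of the integral applied to the mixture density. It does not even require the latent-variable interpretation, which only serves as intuition.

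Next I apply the hypothesis \eqref{eq:gmm_mode_chance} term by term. Since every weight is nonnegative, multiplying each mode-wise inequality by $w_{i,m}\ge 0$ preserves its direction, and summing gives
\begin{equation*}
\mathbb{P}(H_i\ge 0)\ge \sum_{m\in\mathcal{J}} w_{i,m}(1-\beta_{i,m}) = \sum_{m\in\mathcal{J}} w_{i,m}-\sum_{m\in\mathcal{J}} w_{i,m}\beta_{i,m}=1-\beta_i.
\end{equation*}
Here the last equality uses $\sum_m w_{i,m}=1$ from \eqref{eq:gmm} together with the risk budget \eqref{eq:gmm_risk_budget}. This is exactly \eqref{eq:pcbf} for obstacle $i$. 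Repeating the argument for each $i\in\mathcal{I}$ completes the proof.

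There is no real obstacle in this argument. The only care needed is bookkeeping: $H_{i,m}$ must be understood as $H_i$ evaluated with the same $(\mathbf{x},\mathbf{u})$ but with $\mathbf{z}^{o}_i$ drawn from the $m$-th component, so that the mixture identity holds. A mode with $w_{i,m}=0$ contributes nothing, so its $\beta_{i,m}$ is unconstrained and can be ignored. The admissible allocations also need $\beta_{i,m}\in[0,1]$ so that each mode-wise constraint is meaningful. The budget condition can even be relaxed to $\sum_m w_{i,m}\beta_{i,m}\le\beta_i$ without changing the proof.
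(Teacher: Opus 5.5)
Your proposal is correct and follows essentially the same argument as the paper: both expand $\mathbb{P}(H_i\ge 0)$ as $\sum_{m\in\mathcal{J}} w_{i,m}\,\mathbb{P}(H_{i,m}\ge 0)$, bound each term with the mode-wise constraint, and use $\sum_{m} w_{i,m}=1$ together with the risk budget to reach $1-\beta_i$. Your extra remarks are sound but not needed for the argument: the latent-mode interpretation, the observation that the budget may be relaxed to $\sum_m w_{i,m}\beta_{i,m}\le\beta_i$, and the treatment of zero-weight modes.
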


\begin{proof}
Using the mixture representation of $H_i$, we obtain
\begin{equation}
\begin{aligned}
\mathbb{P}(H_i\ge0)
&=\sum_{m\in\mathcal{J}}w_{i,m}\mathbb{P}(H_{i,m}\ge0) \ge\sum_{m\in\mathcal{J}}w_{i,m}(1-\beta_{i,m}) \\
&=1-\sum_{m\in\mathcal{J}}w_{i,m}\beta_{i,m} =1-\beta_i,
\end{aligned}
\end{equation}
where the last equality follows from \eqref{eq:gmm_risk_budget}.
\end{proof}

Following~\cite{ren2022chance}, we choose uniform risk allocation, which sets
\(\beta_{i,m}=\beta_i\) for all modes \(m\in\mathcal{J}\). Since
\(\sum_{m\in\mathcal{J}} w_{i,m}=1\), this satisfies the budget condition
\(\sum_{m\in\mathcal{J}} w_{i,m}\beta_{i,m}=\beta_i\).
% In previous work, under a~\ac{gmm} distribution, \ac{cvar} can be computed following~\cite{yang2024risk}, but doing so requires solving for the VaR of the Gaussian mixture, which increases the online computational cost. 
In previous work, under a~\ac{gmm} distribution,~\ac{cvar} can be computed following~\cite{yang2024risk}, but doing so requires a numerical search for the VaR of the~\ac{gmm}, which increases the online computational cost.
To obtain a tractable formulation, we instead impose the following mode-wise \ac{cvar}:
\begin{equation}
\operatorname{CVaR}_{\beta_{i,m}}(H_{i,m}) \ge 0, \quad \forall m\in\mathcal{J}.
\label{eq:gmm_mode_cvar}
\end{equation}
For each $H_{i,m}$, \ac{cvar} admits a closed-form expression \cite{norton2021calculating}:
\begin{equation}
\label{eq:cvar_one_gaussian}
\operatorname{CVaR}_{\beta_{i,m}}(H_{i,m})=\mu_{i,m}^H-\frac{\phi(\Phi^{-1}(\beta_{i,m}))}{\beta_{i,m}}\sigma_{i,m}^H,
\end{equation}
where $\phi$ and $\Phi^{-1}$ are the standard normal PDF and inverse CDF, respectively.
Since $\mu_{i,m}^H$ is affine in $\mathbf{u}$ and $\sigma_{i,m}^H$ is independent of $\mathbf{u}$, 
% for fixed $(\mathbf{x},\beta_m)$,
each constraint
\(\operatorname{CVaR}_{\beta_{i,m}}(H_{i,m})\ge 0\) is a linear inequality in $\mathbf{u}$.
% , i.e.,
% \begin{equation}
% a_m^\top\mathbf{u}\ge b_m,\quad \forall m\in\{1,\dots,M\},
% \label{eq:mode_affine_cvar}
% \end{equation}
% where 
% \begin{equation}
% \begin{aligned}
% \label{eq:am_def}
% &a_m^\top
% =
% \nabla_{\mathbf{x}} h(\mathbf{x},\mathbf{x}^{o})^\top g(\mathbf{x}),\\
% &b_m
% =
% - \nabla_{\mathbf{x}} h(\mathbf{x},\mathbf{x}^{o})^\top f(\mathbf{x})
% - \nabla_{\mathbf{x}^{o}} h(\mathbf{x},\mathbf{x}^{o})^\top f^{o}(\mathbf{x}^{o}) \\
% &
% - \alpha h(\mathbf{x},\mathbf{x}^{o}) 
% - \nabla_{\mathbf{x}^{o}} h(\mathbf{x},\mathbf{x}^{o})^\top \bm{\mu}_m +\frac{\phi(\Phi^{-1}(\beta_m))}{\beta_m}
% \sigma_m^H.
% \end{aligned}
% \end{equation}

\begin{proposition}[Mode-Wise \ac{cvar} Constraint]
  \label{prop:modewise_cvar}
  If \eqref{eq:gmm_mode_cvar} holds for obstacle $i$ and the risk budget satisfies \eqref{eq:gmm_risk_budget}, then the corresponding chance constraint in \eqref{eq:pcbf} holds for obstacle $i$.
  \end{proposition}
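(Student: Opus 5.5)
The plan is to chain two facts already stated in the paper: the CVaR-to-chance implication \eqref{eq:psafe}, applied mode by mode, and then Lemma~\ref{lem:modewise_chance} to pass from the modes to the mixture. Fix obstacle $i$ and a mode $m\in\mathcal{J}$. Under the mode-$m$ component, $H_{i,m}$ is a scalar Gaussian $\mathcal{N}(\mu_{i,m}^H,(\sigma_{i,m}^H)^2)$ as in \eqref{eq:H_mode_mu_cov}.

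\textbf{Step 1: CVaR to chance, per mode.} Applying \eqref{eq:psafe} to the random variable $H_{i,m}$ with level $\beta_{i,m}$, the hypothesis $\operatorname{CVaR}_{\beta_{i,m}}(H_{i,m})\ge 0$ from \eqref{eq:gmm_mode_cvar} gives
\[
\mathbb{P}_{p_{i,m}}(H_{i,m}\ge 0)\ge 1-\beta_{i,m}.
\]
This is exactly the mode-wise chance constraint \eqref{eq:gmm_mode_chance}.

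If one wants a self-contained justification of \eqref{eq:psafe} in this Gaussian case, it is short. The CVaR is the conditional mean of the lower $\beta$-tail, and that tail lies entirely below $\operatorname{VaR}_\beta$, so
\[
\operatorname{VaR}_{\beta}(H_{i,m})\ge \operatorname{CVaR}_{\beta}(H_{i,m})\ge 0 .
\]
Since $H_{i,m}$ has a continuous distribution, $\mathbb{P}(H_{i,m}\ge \operatorname{VaR}_\beta)=1-\beta$. Monotonicity of probability then gives $\mathbb{P}(H_{i,m}\ge 0)\ge 1-\beta$.

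Concretely, in terms of \eqref{eq:cvar_one_gaussian}: the inequality $\phi(\Phi^{-1}(\beta))/\beta \ge -\Phi^{-1}(\beta)$ is the standard Mills-ratio bound $\mathbb{E}[Z\mid Z\le q]\le q$. It shows that $\mu^H_{i,m}-\tfrac{\phi(\Phi^{-1}(\beta))}{\beta}\sigma^H_{i,m}\ge 0$ implies $\mu^H_{i,m}+\Phi^{-1}(\beta)\sigma^H_{i,m}\ge 0$. The latter is precisely the Gaussian quantile form of the chance constraint.

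\textbf{Step 2: modes to mixture.} Steps 1 gives \eqref{eq:gmm_mode_chance} for every $m\in\mathcal{J}$, and the risk budget \eqref{eq:gmm_risk_budget} holds by assumption. Lemma~\ref{lem:modewise_chance} then applies directly and yields $\mathbb{P}(H_i\ge 0)\ge 1-\beta_i$, which is \eqref{eq:pcbf} for obstacle $i$.

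\textbf{Main obstacle.} The only real subtlety is the degenerate case $\sigma_{i,m}^H=0$, where the Gaussian has no density. There $H_{i,m}$ is deterministic and both CVaR and VaR equal $\mu_{i,m}^H$, so the implication holds trivially. Beyond this case and Step 1 itself, the argument is bookkeeping.
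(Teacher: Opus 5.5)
Your proof is correct and matches the paper's argument: apply \eqref{eq:psafe} to each mode to obtain \eqref{eq:gmm_mode_chance}, then invoke Lemma~\ref{lem:modewise_chance} with the budget \eqref{eq:gmm_risk_budget}. Your self-contained Gaussian check of \eqref{eq:psafe} (via $\operatorname{VaR}_\beta\ge\operatorname{CVaR}_\beta$ and the Mills-ratio bound) and your handling of the degenerate case $\sigma^H_{i,m}=0$ are both correct, but they are optional extras, since the paper simply cites \eqref{eq:psafe}.
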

  
  \begin{proof}
  By \eqref{eq:psafe}, \eqref{eq:gmm_mode_cvar} implies \eqref{eq:gmm_mode_chance}. The result then follows from Lemma~\ref{lem:modewise_chance}.
  \end{proof}

\subsection{Probabilistic Safety Guarantee}
Building on Proposition~\ref{prop:modewise_cvar}, we formulate the following tractable~\ac{cvarbfqp}.
Given a nominal input $\mathbf{u}_{\textup{nom}}$, risk levels $\{\beta_i\}_{i\in\mathcal{I}}$, and safety margins $\{\Delta R_i\}_{i\in\mathcal{I}}$, this yields the following~\ac{cvarbfqp}, which computes a control input close to $\mathbf{u}_{\textup{nom}}$ while satisfying the risk constraints:
\begin{equation}
\begin{split}
\raisetag{6.0ex}
\label{eq:cvarbfqp}
\mathbf{u}^*=  \arg \min_{\mathbf{u}\in\mathcal{U}}\quad & \frac{1}{2}\left\|\mathbf{u}-\mathbf{u}_{\textup{nom}}\right\|_2^2 \\
\textup{s.t.}\quad
& \eqref{eq:gmm_mode_cvar} \, \text{and} \, \eqref{eq:gmm_risk_budget}.
\end{split}
\end{equation}

% \noindent\textbf{Computational Complexity.}
% This is a convex~\ac{qp} with $|\mathcal{I}||\mathcal{J}|$ linear constraints, solvable in polynomial time by interior-point methods. The online runtime scales approximately linearly with the number of obstacles $|\mathcal{I}|$, making it suitable for real-time control.

We now prove that solving~\eqref{eq:cvarbfqp} yields the finite-horizon probabilistic safety guarantee in~\eqref{eq:prob_safety}. In implementation, since the controller is actually applied at discrete sampling instants, we partition the continuous horizon $[0,T]$ into sampling intervals and prove safety recursively over these intervals.
Let $t_k=k\Delta t$ for $k=0,\dots,K-1$ and $t_K=T$, where $\Delta t>0$ and $K=\lceil T/\Delta t\rceil$. 
For each $k$, define the safety event up to time $t_k$ as
\[
\Gamma_k\coloneqq
\bigcap_{i\in\mathcal{I}}
\left\{
h_i(\mathbf{x}(t),\mathbf{x}_i^o(t))\ge0,\ 
\forall t\in[0,t_k]
\right\}.
\]

% For each $k$, define the safety event up to time $t_k$ as
% \[
% \Gamma_k\coloneqq\{\mathbf{x}(t)\in\mathcal{C},\ \forall t\in[0,t_k]\}.
% \]

% inspired by To connect this sampled-data implementation with the continuous-time safety event $\Gamma_k$, we introduce the following assumption.

\begin{assumption} 
\label{assump:inter_sample}
The probabilistic~\ac{cbf} condition in~\eqref{eq:pcbf}, enforced at sampling time $t_k$, is assumed to remain satisfied over the interval $[t_k,t_{k+1})$, conditioned on $\Gamma_k$, i.e.,
\[
\mathbb{P}\Bigl(H_i(t)\ge0,\ \forall t\in[t_k,t_{k+1})\,\Big|\,\Gamma_k\Bigr)\ge1-\beta_i.
\]
\end{assumption}
% This assumption is introduced to explicitly model the inter-sample preservation requirement that arises in sampled-data implementations of CBF-based controllers. Related inter-sample safety issues have been studied in prior work on sampled-data and sample-and-hold CBFs~\cite{breeden2021control}.
% \begin{remark}
% \label{rem:assump_sufficient}
Assumption~\ref{assump:inter_sample} can be verified under standard sampled-data CBF assumptions~\cite{hoxha2026bayesian}: (i)~zero-order-hold control and (ii)~Lipschitz continuity of $H_i$ along trajectories, $|H_i(t)-H_i(t_k)|\le L_{i,k}|t-t_k|$. These imply that inter-sample safety holds when the sampling interval $\Delta t$ is sufficiently small that $\mathbb{P}(H_i(t_k)\ge L_{i,k}\Delta t\mid\Gamma_k)\ge 1-\beta_i$. 
% As $\Delta t\to 0$, this margin aligns with the $\operatorname{CVaR}\ge 0$ constraint in~\eqref{eq:cvarbfqp}.
% \end{remark}

\begin{proposition}[Probabilistic safety over a finite horizon]
\label{prop:finite_horizon_sampling}
% Assume the control input is implemented under zero-order hold~\cite{breeden2021control}, i.e., for each $k\in\{0,\dots,K-1\}$, $\mathbf{u}(t)=\mathbf{u}_k,\; \forall t\in[t_k,t_{k+1})$, where $\mathbf{u}_k$ is computed from \eqref{eq:cvarbfqp} at time $t_k$.
Suppose Assumption~\ref{assump:inter_sample} holds over each interval $[t_k,t_{k+1})$. At each sampling time $t_k$, the control input is computed from~\eqref{eq:cvarbfqp}. 
If $\mathbb{P}(\Gamma_0)=1$ and $\beta\coloneqq\sum_{i\in\mathcal{I}}\beta_i<1$, then
\begin{equation}
\mathbb{P}\!\left(\Gamma_K\right)\ge (1-\beta)^K.
\end{equation}
In particular, for any target horizon risk $\epsilon\in(0,1)$, if $ \beta\le 1-(1-\epsilon)^{1/K}$, 
then
\begin{equation}
\mathbb{P}\!\left(\Gamma_K\right)\ge 1-\epsilon.
\end{equation}
\end{proposition}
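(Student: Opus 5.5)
The plan is to argue recursively over the sampling intervals and then chain the one-step conditional bounds. Fix $k\in\{0,\dots,K-1\}$. At $t_k$ the input comes from \eqref{eq:cvarbfqp}, so by Proposition~\ref{prop:modewise_cvar} the chance constraint \eqref{eq:pcbf} holds for every $i\in\mathcal{I}$ at $t_k$. Assumption~\ref{assump:inter_sample} then extends this over the interval: for each $i$,
\[
\mathbb{P}\bigl(H_i(t)\ge0,\ \forall t\in[t_k,t_{k+1})\,\big|\,\Gamma_k\bigr)\ge 1-\beta_i .
\]
A union bound (Boole's inequality) over the finitely many obstacles gives that, conditioned on $\Gamma_k$, the event $E_k\coloneqq\bigcap_{i}\{H_i(t)\ge0,\ \forall t\in[t_k,t_{k+1})\}$ has probability at least $1-\sum_i\beta_i=1-\beta$. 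The hypothesis $\beta<1$ is what makes this bound nontrivial.

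The key step, and the one I expect to be the main obstacle, is showing $\Gamma_k\cap E_k\subseteq\Gamma_{k+1}$. This is a pathwise comparison argument. On $\Gamma_k$ we have $h_i(t_k)\ge0$. On $E_k$ we have $\dot h_i\ge-\alpha(h_i)$ along the realized trajectory on $[t_k,t_{k+1})$. The standard comparison lemma for extended class-$\mathcal{K}$ $\alpha$ (the same argument behind the forward invariance result of \cite{ames2019cbf}) then gives $h_i(t)\ge0$ on $[t_k,t_{k+1})$. Continuity of $h_i$ along trajectories extends this to the closed endpoint $t_{k+1}$. I would state explicitly that realized sample paths are absolutely continuous, so that the comparison lemma applies pathwise. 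I would also read $H_i(t)$ as the realized value of $\dot h_i+\alpha(h_i)$ along the trajectory, which is the interpretation implicit in Assumption~\ref{assump:inter_sample}.

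Given this inclusion, we get $\mathbb{P}(\Gamma_{k+1}\mid\Gamma_k)\ge\mathbb{P}(E_k\mid\Gamma_k)\ge1-\beta$. Since $\Gamma_{k+1}\subseteq\Gamma_k$, it follows that $\mathbb{P}(\Gamma_{k+1})=\mathbb{P}(\Gamma_{k+1}\mid\Gamma_k)\,\mathbb{P}(\Gamma_k)$. Induction from $\mathbb{P}(\Gamma_0)=1$ then yields $\mathbb{P}(\Gamma_K)\ge(1-\beta)^K$. If some $\mathbb{P}(\Gamma_k)=0$, the conditioning is undefined, but this case cannot arise, because the induction keeps $\mathbb{P}(\Gamma_k)\ge(1-\beta)^k>0$.

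For the second claim, the condition $\beta\le1-(1-\epsilon)^{1/K}$ is equivalent to $1-\beta\ge(1-\epsilon)^{1/K}$. Raising both sides to the power $K$, which preserves the inequality since both sides are nonnegative, gives $(1-\beta)^K\ge1-\epsilon$. Therefore $\mathbb{P}(\Gamma_K)\ge1-\epsilon$. Since $\Gamma_K$ is exactly the event in \eqref{eq:prob_safety}, this solves Problem~\ref{prob:finite_horizon_navigation}.
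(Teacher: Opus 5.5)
Your proof is correct and follows the paper's argument essentially step for step: a union bound over obstacles conditioned on $\Gamma_k$, the pathwise CBF inclusion $\Gamma_k\cap E_k\subseteq\Gamma_{k+1}$, and the chain rule starting from $\mathbb{P}(\Gamma_0)=1$. Your extra remarks tidy up two points the paper glosses over. The first is using continuity of $h_i$ to close the endpoint $t_{k+1}$, which the paper's identity $\Gamma_{k+1}=\Gamma_k\cap F_k$ with half-open $F_k$ silently needs. The second is using $\beta<1$ to keep $\mathbb{P}(\Gamma_k)\ge(1-\beta)^k>0$, so the conditioning is well defined.
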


% and let $E_k\coloneqq\bigcap_{i\in\mathcal{I}}E_{k,i}$. By Assumption~\ref{assump:inter_sample},
% \[
% \mathbb{P}(E_{k,i}\mid\Gamma_k)\ge1-\beta_i,\quad \forall i\in\mathcal{I}.
% \]
\begin{proof}
According to Proposition~\ref{prop:modewise_cvar}, the solution of~\eqref{eq:cvarbfqp} enforces \eqref{eq:pcbf} for every $k\in\{0,\dots,K-1\}$. For each obstacle $i\in\mathcal{I}$, define
\[
E_{k,i}\coloneqq\{H_i(t)\ge0,\ \forall t\in[t_k,t_{k+1})\},
\]
and let $E_k\coloneqq\bigcap_{i\in\mathcal{I}}E_{k,i}$. By Assumption~\ref{assump:inter_sample},
\[
\mathbb{P}(E_{k,i}\mid\Gamma_k)\ge1-\beta_i,\quad \forall i\in\mathcal{I}.
\]
Thus, by De Morgan's law and the union bound,
\[
\begin{aligned}
\mathbb P(E_k\mid\Gamma_k)
&=1-\mathbb P(E_k^c\mid\Gamma_k)=1-\mathbb P\!\left(\bigcup_{i\in\mathcal{I}}E_{k,i}^c\,\middle|\,\Gamma_k\right)\\
&\ge 1-\sum_{i\in\mathcal{I}}\mathbb P(E_{k,i}^c\mid\Gamma_k)\ge 1-\sum_{i\in\mathcal{I}}\beta_i
=1-\beta.
\end{aligned}
\]
Define the interval safety event
\[
F_{k,i}\coloneqq
\left\{
h_i(\mathbf{x}(t),\mathbf{x}_i^o(t))\ge0,\ 
\forall t\in[t_k,t_{k+1})
\right\},
\]
and let $F_k\coloneqq\bigcap_{i\in\mathcal{I}}F_{k,i}$. 
% On $\Gamma_k\cap E_{k,i}$, we have $h_i(\mathbf{x}(t_k),\mathbf{x}_i^o(t_k))\ge0$ and $H_i(t)\ge0$ for all $t\in[t_k,t_{k+1}]$. By the definition of $H_i$ in~\eqref{eq:H}, this implies
% \[
% \dot h_i(t)+\alpha(h_i(t))\ge0,\quad \forall t\in[t_k,t_{k+1}],
% \]
% where $h_i(t)\coloneqq h_i(\mathbf{x}(t),\mathbf{x}_i^o(t))$. 
By the~\ac{cbf} theory~\cite{ames2019cbf}, we have $\Gamma_k\cap E_{k,i}\subseteq F_{k,i}$. Taking the intersection over all $i\in\mathcal{I}$ gives
\[
\Gamma_k\cap E_k\subseteq F_k.
\]
Since $\Gamma_{k+1}=\Gamma_k\cap F_k$, we have
\[
\mathbb P(\Gamma_{k+1}\mid\Gamma_k)
=\mathbb P(F_k\mid\Gamma_k)
\ge \mathbb P(E_k\mid\Gamma_k)
\ge1-\beta.
\]
Since $\Gamma_{k+1}$ augments $\Gamma_k$ with safety over the entire interval $[t_k,t_{k+1})$, the events $\{\Gamma_k\}_{k=0}^{K}$ are nested continuous-time safety events. Thus, using $\mathbb P(\Gamma_0)=1$, the chain rule gives
% Given $\mathbb P(\Gamma_0)=1$, the chain rule gives
\[
\mathbb P(\Gamma_K)
=\prod_{k=0}^{K-1}\mathbb P(\Gamma_{k+1}\mid\Gamma_k)
\ge (1-\beta)^K.
\]
If $\beta\le 1-(1-\epsilon)^{1/K}$, then $(1-\beta)^K\ge 1-\epsilon$, and hence $\mathbb P(\Gamma_K)\ge 1-\epsilon$. 
 Since $t_K=T$, the definition of $\Gamma_K$ gives
\[
\Gamma_K=
\bigcap_{i\in\mathcal{I}}
\left\{
h_i(\mathbf{x}(t),\mathbf{x}_i^o(t))\ge0,\ 
\forall t\in[0,T]
\right\}.
\]
\end{proof}
  \vspace{-5pt}

% The next subsection will describe how the learned policy parameterizes $\mathbf{u}_{\textup{nom}}$, $\{\beta_i\}_{i\in\mathcal{I}}$, and $\{\Delta R_i\}_{i\in\mathcal{I}}$.

\subsection{Adaptive~\ac{cvarbfqp}}
\label{sec:adaptive_cvarbfqp}

% Building on the finite-horizon probabilistic safety guarantee in Definition~\ref{def:prob_safety}, we learn a risk-adaptive policy within a differentiable~\ac{cvarbfqp} safety-layer architecture using \eqref{eq:cvarbfqp}.
% Building on the finite-horizon probabilistic safety guarantee in Definition~\ref{def:prob_safety}, we formulate \eqref{eq:cvarbfqp} as a differentiable safety layer, enabling \textbf{end-to-end reinforcement learning of the entire control policy} while maintaining \textbf{probabilistic safety guarantees}. 

% Given the finite-horizon probabilistic safety guarantee in~\eqref{eq:prob_safety} and the closed-form expression of the constraint, 
The proposed optimization in~\eqref{eq:cvarbfqp} can be implemented as a differentiable safety layer, enabling \textbf{end-to-end policy learning} while maintaining \textbf{probabilistic safety guarantees}. The resulting closed-loop system is modeled as a Markov decision process $(\mathcal{S},\mathcal{A},\mathcal{P},r,\gamma)$ and trained with an actor-critic algorithm. At sampling step $k$, $s_k\in\mathcal{S}$ is the state and $a_k\in\mathcal{A}$ is the actor output; $\gamma\in(0,1)$ is the discount factor. The actor policy $\pi_\theta(a_k\mid s_k)$ produces a nominal control input and adaptive~\ac{qp} parameters, while the critic $V_\omega(s_k)$ estimates the expected return.

Unlike standard~\ac{rl}, the action in our framework is not the final control input applied to the system. Instead, the actor outputs the adaptive parameters of the differentiable safety layer,
$
a_k=\left(\mathbf{u}_{\textup{nom},k},
\{\beta_{i,k}\}_{i\in\mathcal{I}},
\{\Delta R_{i,k}\}_{i\in\mathcal{I}}\right),
$
where $\mathbf{u}_{\textup{nom},k}$ is the nominal control input, and $\beta_{i,k}$ and $\Delta R_{i,k}$ are the risk level and adaptive safety-margin offset for obstacle $i$. We constrain the learned parameters to $\beta_{i,k}\ge0$, $\sum_{i\in\mathcal{I}}\beta_{i,k}\le\beta_{\max}$, and $\Delta R_{i,k} \in [0,\Delta R_{\max}]$, where $\beta_{\max}\in(0,1)$ is a user-defined total risk budget and $\Delta R_{\max}$ is the maximum admissible safety-margin offset. 
The closed-loop controller consists of a neural-network actor followed by the differentiable~\ac{cvarbfqp} safety layer. As shown in Fig.~\ref{fig:overview}, the per-step forward map is
$
s_k \xrightarrow{\pi_\theta} a_k
\xrightarrow{\eqref{eq:cvarbfqp}} \mathbf{u}_k^*.
$

The policy is trained with an actor-critic objective that maximizes expected discounted return. 
Because the~\ac{cvarbfqp} layer is differentiable, gradients backpropagate through the safety layer to the policy outputs $a_k$ and the actor parameters $\theta$. This allows the policy to optimize task performance while the~\ac{cvarbfqp} layer enforces a risk-aware probabilistic safety guarantee at each step.

\section{Simulation}

\subsection{Simulation Settings}
\subsubsection{Environment}
We evaluate our method in a crowd-navigation simulator on a $12\,\textup{m} \times 12\,\textup{m}$ workspace with time step $\Delta t = 0.1\,\textup{s}$. 
We train our policy using 20 dynamic obstacles, each modeled as a circle of radius $0.4\,\textup{m}$, with speeds sampled from $[0.5,1.5]\,\textup{m/s}$ and randomized initial positions and velocities. Test results are averaged over 10 independent runs (random seeds), each evaluated on 50 episodes with randomized initial conditions.

Obstacle motion follows the Social Force Model (SFM) with uncooperative behavior. 
Each obstacle uses the three-mode~\ac{gmm} in~\eqref{eq:gmm}: one forward mode and two lateral-deviation modes. For simplicity, all obstacles use the same~\ac{gmm}, with weights $[w_1,w_2,w_3]=[0.6,0.2,0.2]$ and standard deviations $[\sigma_1,\sigma_2,\sigma_3]=[0.1,0.2,0.2]$, although the framework supports obstacle-dependent parameters.
The forward mode follows the SFM direction, whereas the two lateral modes represent sudden left or right deviations. Across multiple obstacles, these modes induce combinatorial uncertainty that can accumulate over time and create tail-risk collisions absent under deterministic obstacle-motion assumptions in prior environments~\cite{liu2022intention, yao2024sonic}. 
To isolate the contribution of the~\ac{cvarbfqp} layer, we assume oracle access to obstacle uncertainty, i.e., the simulator provides~\ac{gmm} parameters that are propagated with constant-velocity prediction and then passed to the~\ac{cvar} computation module.
% This prevents prediction errors from being conflated with optimization-layer performance and yields a controlled evaluation of the proposed module. 
In real deployments, the same interface can be provided by a learned predictor, such as Trajectron++~\cite{salzmann2020trajectron}.

\subsubsection{Robot Models and Barrier Functions}
We consider a single-integrator ($\dot{\mathbf{p}}=\mathbf{u}$, $\mathbf{p}=[x,y]^\top$, $\mathbf{u}=[v_x,v_y]^\top$) and a unicycle ($\dot x=v\cos\theta$, $\dot y=v\sin\theta$, $\dot\theta=\omega$, $\mathbf{u}=[v,\omega]^\top$), with look-ahead point $\mathbf{p}_{\ell}=[x+\ell\cos\theta,\,y+\ell\sin\theta]^\top$ for the unicycle. Both models use $v_{\max}=1.5$\,m/s; the unicycle additionally has $|\omega|\le\omega_{\max}=1.5$\,rad/s. Collision avoidance with obstacle $i$ (position $\mathbf{p}_i^o$, radius $R^o$) uses the quadratic barrier $h_i(\mathbf{p}_c,\mathbf{p}_i^o)=\|\mathbf{p}_c-\mathbf{p}_i^o\|_2^2-R_i^2$, with $R_i=R_r+R^o+\Delta R_i$ and $\mathbf{p}_c=\mathbf{p}$ (single-integrator) or $\mathbf{p}_c=\mathbf{p}_\ell$ (unicycle)~\cite{wilson2020robotarium}. This $h_i$ is substituted into $H_i$ in~\eqref{eq:H} and enforced through~\eqref{eq:pcbf}.

\subsubsection{Observation and Action Space}
The observation is $o_k=[o_k^{\textup{rg}},o_k^{\textup{obs}}]\in\mathbb{R}^{8+6N_{\textup{obs}}}$, where
$
o_k^{\textup{rg}}=[x_k,y_k,x_g,y_g,v_{x,k},v_{y,k},\theta_k,R_r]
$
contains the robot state, goal, and radius, and
$
o_k^{i}=[x_k^{o,i},y_k^{o,i},v_{x,k}^{o,i},v_{y,k}^{o,i},R^o,m_k^i]
$
contains obstacle position, velocity, radius, and an activity mask indicating whether obstacle $i$ is taken into account. We set $N_{\textup{obs}}=1$, i.e., only the nearest visible obstacle within the local sensing range is included. 
% For $N_{\textup{obs}}>1$, 
Observations with multiple obstacles ($N_{\textup{obs}}>1$) can be encoded using, e.g., graph neural networks, and we leave this extension to future work.
As defined in Section~\ref{sec:adaptive_cvarbfqp}, the policy action parameterizes the differentiable safety layer. 
The policy outputs $\mathbf{u}_{\textup{nom},k}\in\mathbb{R}^2$, corresponding to $[v_{x,k},v_{y,k}]^\top$ for the single-integrator and $[v_k,\omega_k]^\top$ for the unicycle. 
We set $\beta_{\max} = 0.5$ and
$\Delta R_{\max} = 1.5(R_r+R^o)$ at each step.
% Thus, $a_k\in\mathbb{R}^4$ for both models, and the safe control input $\mathbf{u}_k^*$ is obtained from the~\ac{cvarbfqp} layer in \eqref{eq:cvarbfqp}.
\subsubsection{Reward}
% Following~\cite{liu2022intention}, 
% we define reward as follows:
% \begin{equation*}
% r_k=
% \begin{cases}
% +10, & \textup{if the robot reaches the goal},\\
% -20, & \textup{if a collision occurs},\\
% r_{\textup{prog},k}, & \textup{otherwise}.
% \end{cases}
% \end{equation*}
Following~\cite{liu2022intention}, the reward gives $+10$ at the goal, $-20$ on collision, and otherwise a progress term $r_{\textup{prog},k}=2(d_{k-1}^{\textup{goal}}-d_k^{\textup{goal}})$ with $d_k^{\textup{goal}}=\|\mathbf{p}_k-\mathbf{p}_{\textup{goal}}\|_2$ and $\mathbf{p}_{\textup{goal}}$ denotes the goal position. 
For the unicycle, we also include rotation and backward-motion penalties $r_{\textup{rot},k}$ and $r_{\textup{back},k}$, with $\alpha_s, \alpha_b > 0$:
\begin{equation*}
r_{\textup{rot},k}
=
-\alpha_s(\omega_k\Delta t)^2,
\quad 
r_{\textup{back},k}
-\alpha_b|v_k|,\, \text{if}\,\, v_k<0.
\end{equation*}

\subsubsection{Performance Metrics}
Each test case $j$ terminates at time $T_j\le T_j^{\max}$ upon reaching the goal neighborhood ($\|\mathbf{p}_j-\mathbf{p}_{\textup{goal}}\|_2<\delta$) or upon collision. Let $m_t$, $m_s$, and $m_c$ denote the total, successful, and collision counts, respectively, with $\mathcal{M}$ the index set of successful cases. The episodic return is $G_j=\sum_{k=0}^{K_j}\gamma^k r_{j,k}$ where $K_j=\lfloor T_j/\Delta t \rfloor$, and the minimum distance is
$
d_j^{\min}=\min_{t\in[0,T_j]}\min_{i\in\mathcal{I}}\bigl(\|\mathbf{p}_j(t)-\mathbf{p}_i^o(t)\|-R_r-R^o\bigr)
$. We report five metrics:
\begin{equation*}
\textup{SR}=\frac{m_s}{m_t},\;\;
\textup{CR}=\frac{m_c}{m_t},\;\;
\textup{Avg.\,Ret.}=\frac{1}{m_t}\!\sum_{j=1}^{m_t} G_j,
\end{equation*}
\begin{equation*}
\textup{Traj.\,Time}=\frac{1}{m_s}\!\sum_{j\in\mathcal{M}} T_j,\;\;
\textup{Min.\,Dist.}=\frac{1}{m_s}\!\sum_{j\in\mathcal{M}} d_j^{\min}.
\end{equation*}

\subsection{Baselines}

\begin{table*}[t]
\centering
% \small
\footnotesize
\setlength{\tabcolsep}{5pt}
\renewcommand{\arraystretch}{1.15}

\begin{tabular}{lllccccc}
\hline
\textbf{Robot} & \textbf{Type} & \textbf{Policy}
& \textbf{SR} & \textbf{CR} & \textbf{Avg. Ret.} & \textbf{Traj. Time} & \textbf{Min. Dist.} \\
\hline

\multirow{12}{*}{\textbf{\textit{Single Integrator}}}
& \multirow{4}{*}{Opt.}
& ORCA~\cite{alonso2013optimal}
& \noshade\worst{\meanstd{50.2}{3.8}}
& \noshade\worst{\meanstd{49.8}{3.8}}
& \noshade\worst{\meanstd{1.31}{0.16}}
& \noshade\best{\meanstd{6.26}{0.08}}
& \noshade\worst{\meanstd{0.22}{0.04}} \\
& & CBF-QP~\cite{ames2019cbf}
& \shadefour\meanstd{67.0}{5.3}
& \shadefour\meanstd{33.0}{5.3}
& \shadefour\meanstd{2.04}{0.22}
& \shadefour\meanstd{7.50}{0.11}
& \shadefour\meanstd{0.25}{0.02} \\
& & \ac{cvarbfqp}~\cite{ahmadi2021risk}
& \noshade\meanstd{67.6}{4.5}
& \noshade\meanstd{32.4}{4.5}
& \noshade\meanstd{2.09}{0.19}
& \noshade\meanstd{7.66}{0.12}
& \noshade\meanstd{0.28}{0.03} \\
& & Adaptive-CVaR-BF~\cite{wang2025safe}
& \shadefour\meanstd{67.6}{4.8}
& \shadefour\meanstd{32.4}{4.8}
& \shadefour\meanstd{2.08}{0.20}
& \shadefour\meanstd{7.65}{0.10}
& \shadefour\meanstd{0.26}{0.03} \\

\noalign{\vskip 2pt}
\cdashline{2-3}\cdashline{4-8}
\noalign{\vskip 2pt}

& \multirow{3}{*}{RL}
& Vanilla~\ac{rl}
& \noshade\meanstd{62.8}{4.3}
& \noshade\meanstd{37.2}{4.3}
& \noshade\meanstd{1.94}{0.19}
& \noshade\meanstd{8.30}{0.29}
& \noshade\best{\meanstd{0.46}{0.03}} \\
& & CrowdNav++ (const vel)~\cite{liu2022intention}
& \shadefour\meanstd{70.0}{3.8}
& \shadefour\meanstd{30.0}{3.8}
& \shadefour\meanstd{1.47}{0.15}
& \shadefour\meanstd{8.22}{0.24}
& \shadefour\meanstd{0.30}{0.03} \\
& & CrowdNav++ (inferred)~\cite{liu2022intention}
& \noshade\meanstd{62.6}{4.1}
& \noshade\meanstd{37.2}{4.1}
& \noshade\meanstd{1.34}{0.14}
& \noshade\meanstd{8.27}{0.32}
& \noshade\meanstd{0.38}{0.03} \\

\noalign{\vskip 2pt}
\cdashline{2-3}\cdashline{4-8}
\noalign{\vskip 2pt}

& \multirow{5}{*}{RL+Opt.}
& Vanilla~\ac{rl} + SF
& \shadefour\meanstd{67.8}{3.5}
& \shadefour\meanstd{32.2}{3.5}
& \shadefour\meanstd{2.15}{0.15}
& \shadefour\meanstd{8.65}{0.25}
& \shadefour\meanstd{0.46}{0.03} \\
& & CrowdNav++ (const vel) + SF
& \noshade\meanstd{71.6}{3.0}
& \noshade\meanstd{28.4}{3.0}
& \noshade\meanstd{1.57}{0.11}
& \noshade\worst{\meanstd{9.26}{0.32}}
& \noshade\meanstd{0.36}{0.04} \\
& & CrowdNav++ (inferred) + SF
& \shadefour\meanstd{65.6}{3.1}
& \shadefour\meanstd{34.2}{3.3}
& \shadefour\meanstd{1.41}{0.11}
& \shadefour\meanstd{8.84}{0.38}
& \shadefour\meanstd{0.39}{0.03} \\
\cdashline{3-3}[0.3pt/1.5pt]\cdashline{4-8}[0.3pt/1.5pt]
& & BarrierNet~\cite{xiao2023barriernet}
& \noshade\meanstd{68.2}{3.3}
& \noshade\meanstd{31.8}{3.3}
& \noshade\meanstd{2.17}{0.13}
& \noshade\meanstd{7.17}{0.06}
& \noshade\meanstd{0.38}{0.02} \\
& & Proposed
& \shadefour\best{\meanstd{72.4}{3.1}}
& \shadefour\best{\meanstd{27.6}{3.1}}
& \shadefour\best{\meanstd{2.38}{0.13}}
& \shadefour\meanstd{7.85}{0.17}
& \shadefour\meanstd{0.40}{0.03} \\

\hline

\multirow{12}{*}{\textbf{\textit{Unicycle}}}
& \multirow{4}{*}{Opt.}
& ORCA~\cite{alonso2013optimal}
& \noshade\meanstd{52.6}{4.7}
& \noshade\meanstd{47.4}{4.7}
& \noshade\meanstd{1.92}{0.24}
& \noshade\best{\meanstd{5.76}{0.13}}
& \noshade\meanstd{0.79}{0.09} \\
& & CBF-QP~\cite{ames2019cbf}
& \shadefour\meanstd{61.4}{3.4}
& \shadefour\meanstd{38.6}{3.4}
& \shadefour\meanstd{2.25}{0.18}
& \shadefour\meanstd{6.60}{0.13}
& \shadefour\meanstd{0.75}{0.09} \\
& & \ac{cvarbfqp}~\cite{ahmadi2021risk}
& \noshade\meanstd{58.6}{3.9}
& \noshade\meanstd{41.4}{3.9}
& \noshade\meanstd{2.11}{0.21}
& \noshade\meanstd{6.63}{0.14}
& \noshade\meanstd{0.80}{0.10} \\
& & Adaptive-CVaR-BF~\cite{wang2025safe}
& \shadefour\meanstd{56.2}{4.4}
& \shadefour\meanstd{43.8}{4.4}
& \shadefour\meanstd{2.01}{0.22}
& \shadefour\meanstd{6.52}{0.15}
& \shadefour\meanstd{0.82}{0.10} \\

\noalign{\vskip 2pt}
\cdashline{2-3}\cdashline{4-8}
\noalign{\vskip 2pt}

& \multirow{3}{*}{RL}
& Vanilla~\ac{rl}
& \noshade\meanstd{55.8}{3.9}
& \noshade\meanstd{44.2}{3.9}
& \noshade\meanstd{2.07}{0.20}
& \noshade\meanstd{6.17}{0.10}
& \noshade\best{\meanstd{0.92}{0.12}} \\
& & CrowdNav++ (const vel)~\cite{liu2022intention}
& \shadefour\worst{\meanstd{46.6}{2.4}}
& \shadefour\worst{\meanstd{53.4}{2.4}}
& \shadefour\worst{\meanstd{0.59}{0.10}}
& \shadefour\meanstd{7.84}{0.23}
& \shadefour\worst{\meanstd{0.42}{0.05}} \\
& & CrowdNav++ (inferred)~\cite{liu2022intention}
& \noshade\meanstd{47.4}{3.4}
& \noshade\meanstd{52.6}{3.4}
& \noshade\meanstd{0.73}{0.12}
& \noshade\meanstd{7.70}{0.24}
& \noshade\meanstd{0.52}{0.06} \\

\noalign{\vskip 2pt}
\cdashline{2-3}\cdashline{4-8}
\noalign{\vskip 2pt}

& \multirow{5}{*}{RL+Opt.}
& Vanilla~\ac{rl} + SF
& \shadefour\meanstd{57.0}{4.2}
& \shadefour\meanstd{43.0}{4.2}
& \shadefour\meanstd{1.96}{0.22}
& \shadefour\meanstd{6.78}{0.09}
& \shadefour\meanstd{0.90}{0.09} \\
& & CrowdNav++ (const vel) + SF
& \noshade\meanstd{48.2}{2.9}
& \noshade\meanstd{51.8}{2.9}
& \noshade\meanstd{0.65}{0.12}
& \noshade\worst{\meanstd{8.84}{0.28}}
& \noshade\meanstd{0.53}{0.05} \\
& & CrowdNav++ (inferred) + SF
& \shadefour\meanstd{46.8}{3.6}
& \shadefour\meanstd{53.2}{3.6}
& \shadefour\meanstd{0.70}{0.14}
& \shadefour\meanstd{8.16}{0.29}
& \shadefour\meanstd{0.60}{0.06} \\
\cdashline{3-3}[0.3pt/1.5pt]\cdashline{4-8}[0.3pt/1.5pt]
& & BarrierNet~\cite{xiao2023barriernet}
& \noshade\meanstd{64.4}{3.6}
& \noshade\meanstd{35.6}{3.6}
& \noshade\meanstd{2.35}{0.19}
& \noshade\meanstd{6.51}{0.18}
& \noshade\meanstd{0.75}{0.09} \\
& & Proposed
& \shadefour\best{\meanstd{66.2}{3.9}}
& \shadefour\best{\meanstd{33.8}{3.9}}
& \shadefour\best{\meanstd{2.38}{0.21}}
& \shadefour\meanstd{6.86}{0.12}
& \shadefour\meanstd{0.70}{0.08} \\
\hline
\end{tabular}
\caption{Comparisons of baseline methods in environments with 20 obstacles for different robot dynamic models.}
\label{tab:baseline}
\vspace{-10pt}
\end{table*}

\begin{figure}[t]
\centering
\includegraphics[width=0.97\linewidth]{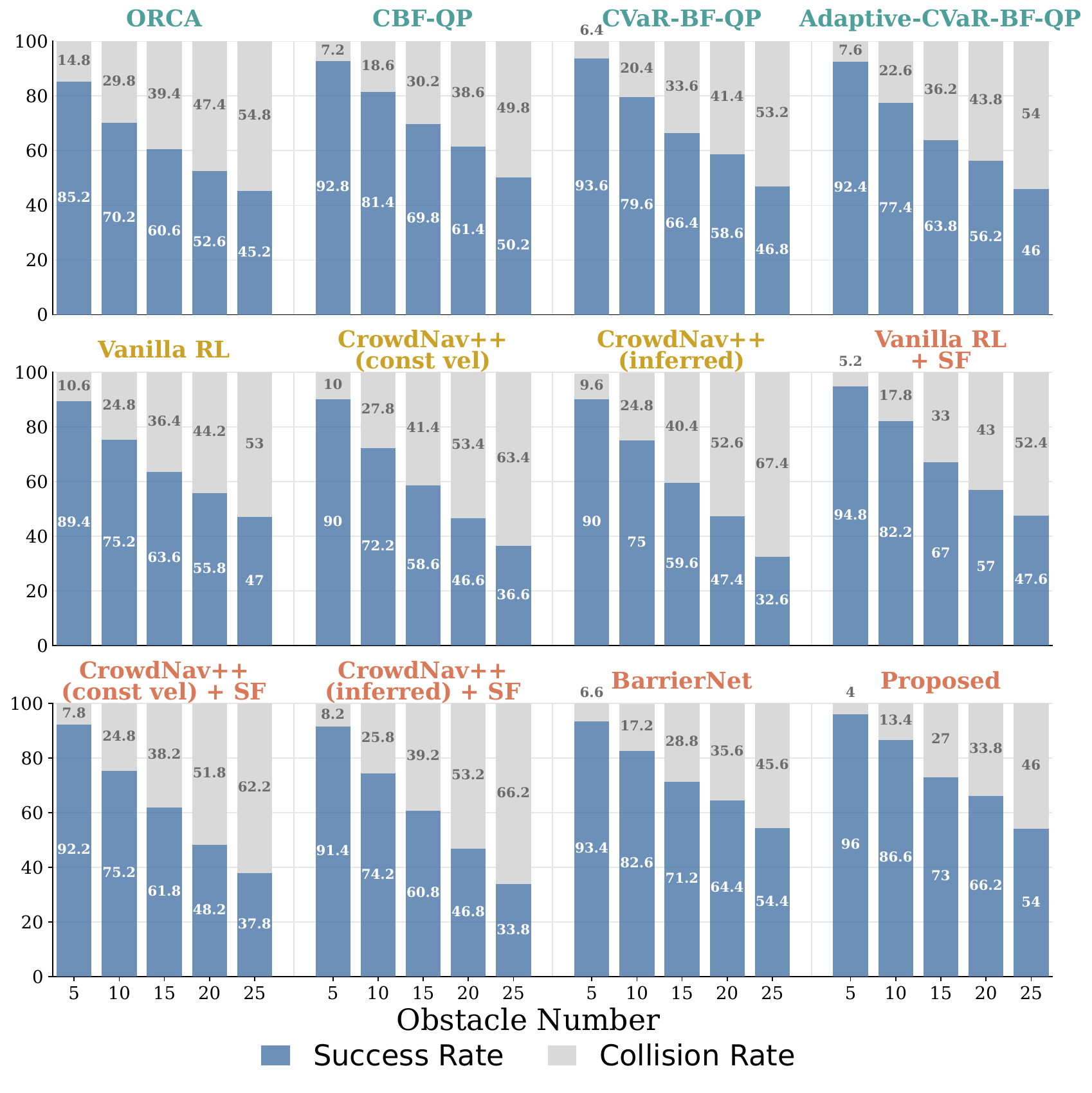}
\caption{Success rate versus obstacle number for unicycle model with different methods: optimization (\textcolor{optcolor}{green}), 
RL (\textcolor{rlcolor}{yellow}), and integrated~\ac{rl} and optimization approaches
(\textcolor{hybridcolor}{red}). 
}  
\label{fig:sr_obstacles_unicycle}
\vspace{-15pt}
\end{figure}

Table~\ref{tab:baseline} presents comparisons with 12 baselines under 20 obstacles for both robot models:
\textbf{Optimization}: ORCA~\cite{alonso2013optimal} (rule-based reciprocal collision avoidance), CBF-QP~\cite{ames2019cbf} (deterministic safety control),~\ac{cvarbfqp}~\cite{ahmadi2021risk} (risk-aware safe control with a fixed risk level), and Adaptive-CVaR-BF~\cite{wang2025safe} (CVaR-BF with adaptive risk tuning).
\textbf{RL:}
    % Vanilla~\ac{rl}, and CrowdNav++~\cite{yao2024sonic}.
    Vanilla~\ac{rl} is a standard end-to-end policy that outputs control actions directly from observations without explicit safety mechanisms.
    CrowdNav++ (const vel)~\cite{liu2022intention} incorporates a socially aware interaction encoder, while CrowdNav++ (inferred)~\cite{liu2022intention} extends this with a modular prediction of obstacle intentions.
\textbf{RL+Optimization}: 
We evaluate two RL+optimization variants: post-hoc safety filtering and differentiable-layer integration. In SF baselines, nominal~\ac{rl} actions are projected through~\eqref{eq:cvarbfqp} to satisfy probabilistic safety constraints. BarrierNet~\cite{xiao2023barriernet} is trained end-to-end with a differentiable CBF-QP layer, but assumes deterministic environments without stochastic obstacle uncertainty.

In general, our method outperforms all baselines on the key metrics, achieving the highest success rate. Although the trajectory time is not the shortest, this is expected as the reward function discourages short yet impolite behaviors. Nevertheless, the proposed method remains efficient, achieving the highest return while maintaining a comfortable distance and comparable trajectory time. Here, we further distill the following key insights from our results.
% , characterizing method performance across categories, the effect of integration  in~\ac{rl}, and the advantages of bridging~\ac{rl} with optimization-based method.
 
% Compared to post-hoc safety filtering, which only provides limited improvements, integrating safety during training leads to fundamentally better policies. 
% Although the proposed method is not always the most time-efficient, it achieves competitive trajectory times while ensuring robust and safe navigation. These results highlight the importance of end-to-end safety integration for reliable robot navigation.

% The proposed method consistently achieves the best overall performance across both robot models.

\textbf{Q1: How do different categories of methods perform in dynamic environments?}
Optimization-based methods generally struggle in highly dynamic environments because the underlying control problems can become infeasible, particularly when surrounding obstacles exhibit uncooperative behavior. In such cases, the robot cannot generate an evasive action, leading directly to collisions and lower success rates. In contrast, pure~\ac{rl} methods improve task performance in some cases but still exhibit relatively high collision rates without explicit safety constraints.
Approaches that combine~\ac{rl} with optimization leverage the strengths of both and achieve more reliable overall performance.

\textbf{Q2: How do different ways of integrating safety into~\ac{rl} affect performance?}
% Table~\ref{tab:baseline} shows that incorporating safety into~\ac{rl} generally improves performance over pure~\ac{rl}.
For~\ac{rl} methods, a post-hoc SF improves the success rate, but remains suboptimal compared to end-to-end safety integration, as it only corrects actions after learning and does not enforce safety during training, potentially degrading policy optimality. 
In contrast, both the proposed method and BarrierNet~\cite{xiao2023barriernet} jointly optimize the policy and safety constraints through a differentiable safety layer, enabling adaptive behavior. However, unlike BarrierNet, the proposed method explicitly models stochastic uncertainty and learns adaptive risk parameters, which leads to stronger performance.

% \vspace{-2pt}
\textbf{Q3: How robust are all methods across different robot models and obstacle densities?} 
Similar performance trends are observed for both the single-integrator and unicycle models. While all methods experience some degradation under the more challenging unicycle dynamics, this is expected due to its more constrained control dynamics. The relative performance ordering remains largely consistent. 
Fig.~\ref{fig:sr_obstacles_unicycle} further evaluates performance as the number of obstacles increases. As obstacle density grows, all methods exhibit a clear decline in success rate, reflecting the increased difficulty of navigation in dense environments. However, the rate of degradation varies across methods: the proposed method is comparatively more stable, while other methods degrade more rapidly.

\subsection{Generalization}

Table~\ref{tab:ood_case_comparison} evaluates generalization under~\ac{ood} scenarios in obstacle behavior, density, and size. We compare four representative methods spanning optimization-based,~\ac{rl}-based, post-hoc safety filtering, and differentiable-layer approaches.

\textbf{Q4: How robust are different types of methods under~\ac{ood} cases?}
Vanilla~\ac{rl} exhibits a noticeable degradation in performance, often performing worse than optimization-based methods such as~\ac{cvarbfqp}. This suggests that optimization-based approaches provide greater robustness to environmental variations compared to purely learned policies.
At the same time, incorporating optimization-based safety mechanisms into~\ac{rl} (e.g.,~\ac{rl} with SF) significantly mitigates this degradation, recovering much of the lost performance and bringing it closer to that of optimization-based methods. However, a performance gap still remains.  
The proposed method consistently outperforms the baselines using SF, achieving higher success rates and returns while maintaining competitive safety. 
This robustness arises from the end-to-end learning of adaptive risk parameters, which enables the policy to adjust its behavior in response to changing scenarios. 
Additional qualitative results and video demonstrations are available on the \href{https://anonymousrobotics9666.github.io/rlcvarbf/}{[Paper Page]}.

\begin{table}[t]
\centering
\footnotesize
\setlength{\tabcolsep}{2.8pt}
\renewcommand{\arraystretch}{1.15}

\begin{tabular}{llcccc}
\hline
\textbf{Case} & \textbf{Method}
& \textbf{SR} & \textbf{Avg. Ret.} & \textbf{Traj. Time} & \textbf{Min. Dist.} \\
\hline

\multirow{4}{*}{\textbf{I}}
&\ac{cvarbfqp}
& \noshade\meanstd{63.0}{4.5} & \noshade\meanstd{1.85}{0.20}
& \noshade\best{\meanstd{7.38}{0.18}} & \noshade\worst{\meanstd{0.26}{0.03}} \\
&\ac{rl}
& \shadefour\worst{\meanstd{58.0}{4.2}} & \shadefour\worst{\meanstd{1.69}{0.18}}
& \shadefour\meanstd{8.08}{0.31} & \shadefour\best{\meanstd{0.46}{0.04}} \\
&\ac{rl} + SF
& \noshade\meanstd{63.2}{3.7} & \noshade\meanstd{1.90}{0.16}
& \noshade\worst{\meanstd{8.58}{0.38}} & \noshade\meanstd{0.46}{0.03} \\
& Proposed
& \shadefour\best{\meanstd{65.2}{4.5}}
& \shadefour\best{\meanstd{2.00}{0.19}}
& \shadefour\meanstd{7.56}{0.16} & \shadefour\meanstd{0.37}{0.04} \\

\noalign{\vskip 1pt}
\cline{1-2}\cline{3-6}
\noalign{\vskip 1pt}

\multirow{4}{*}{\textbf{II}}
&\ac{cvarbfqp}
& \noshade\meanstd{38.8}{4.2} & \noshade\meanstd{0.76}{0.17}
& \noshade\best{\meanstd{7.01}{0.07}} & \noshade\worst{\meanstd{0.16}{0.01}} \\
&\ac{rl}
& \shadefour\worst{\meanstd{33.8}{3.9}} & \shadefour\worst{\meanstd{0.61}{0.15}}
& \shadefour\meanstd{7.12}{0.29} & \shadefour\meanstd{0.29}{0.04} \\
&\ac{rl} + SF
& \noshade\meanstd{39.8}{3.9} & \noshade\meanstd{0.86}{0.16}
& \noshade\worst{\meanstd{7.60}{0.22}} & \noshade\meanstd{0.27}{0.04} \\
& Proposed
& \shadefour\best{\meanstd{44.2}{4.6}}
& \shadefour\best{\meanstd{1.06}{0.20}}
& \shadefour\meanstd{7.11}{0.15} & \shadefour\best{\meanstd{0.29}{0.03}} \\

\noalign{\vskip 1pt}
\cline{1-2}\cline{3-6}
\noalign{\vskip 1pt}

\multirow{4}{*}{\textbf{III}}
&\ac{cvarbfqp}
& \noshade\meanstd{57.4}{6.7} & \noshade\meanstd{1.59}{0.29}
& \noshade\best{\meanstd{7.42}{0.17}} & \noshade\worst{\meanstd{0.24}{0.03}} \\
&\ac{rl}
& \shadefour\worst{\meanstd{51.4}{5.8}} & \shadefour\worst{\meanstd{1.40}{0.25}}
& \shadefour\meanstd{7.88}{0.34} & \shadefour\meanstd{0.43}{0.04} \\
&\ac{rl} + SF
& \noshade\meanstd{56.2}{5.9} & \noshade\meanstd{1.62}{0.27}
& \noshade\worst{\meanstd{8.09}{0.31}} & \noshade\best{\meanstd{0.44}{0.06}} \\
& Proposed
& \shadefour\best{\meanstd{61.6}{4.4}}
& \shadefour\best{\meanstd{1.88}{0.19}}
& \shadefour\meanstd{7.55}{0.16} & \shadefour\meanstd{0.36}{0.04} \\

\hline
\end{tabular}

% explain the reason to choose table ii methods.
% average legnth --> time, error, perceptation 

\caption{Generalization performance of the single-integrator robot under three~\ac{ood} scenarios. Case I: ORCA-based obstacle policy. Case II: high obstacle density (30 obstacles). Case III: increased obstacle radius (\(0.5\,\textup{m}\)). 
% SR in \%, Traj.\ Time in seconds, Min.\ Dist.\ in meters.
}
\label{tab:ood_case_comparison}
\vspace{-15pt}
\end{table}

\subsection{Computational Efficiency}
\label{sec:comp_efficiency}
We benchmark our closed-form~\ac{cvarbfqp} against a sampling-based
baseline~\cite{wang2025safe} on an Apple~M4 CPU (single thread), as shown in~Fig.~\ref{fig:runtime_scaling}. The per-step cost is $\mathcal{O}(N_{\mathrm{obs}} M)$ versus the baseline's
$\mathcal{O}(N_{\mathrm{obs}} N \log N)$, where $N$ is the number of Monte-Carlo samples drawn from the~\ac{gmm} to estimate~\ac{cvar}. Remarkably, the measured
per-step computational time stays nearly flat across the tested
$N_{\mathrm{obs}}$ range, confirming that our closed-form~\ac{cvar} formulation is \emph{tractable} and well suited to real-time control.

\section{Conclusions}
This paper presented an end-to-end risk-adaptive framework for safe robot navigation in uncertain dynamic crowds. By coupling an~\ac{rl} policy with a differentiable~\ac{cvarbfqp} safety layer under a~\ac{gmm} obstacle-motion uncertainty model, the method jointly learns nominal control input, risk level, and safety margin, enabling adaptive conservatism with probabilistic safety guarantees. Experiments with both single-integrator and unicycle robots show that the method maintains safety while improving efficiency relative to optimization-based,~\ac{rl}-based, and integrated~\ac{rl} with optimization methods. 
The gains remain evident in~\ac{ood} cases: under the high obstacle density case (30 obstacles) with the single-integrator robot, the proposed method improves the success rate, corresponding to an approximately 30.8\% relative improvement over vanilla~\ac{rl}. Future work will extend this to encode multiple surrounding obstacles jointly via graph neural networks, and to incorporate multi-step predictive safety constraints for longer planning horizons.

\begin{figure}[t]
  \centering
  \includegraphics[width=0.95\linewidth]{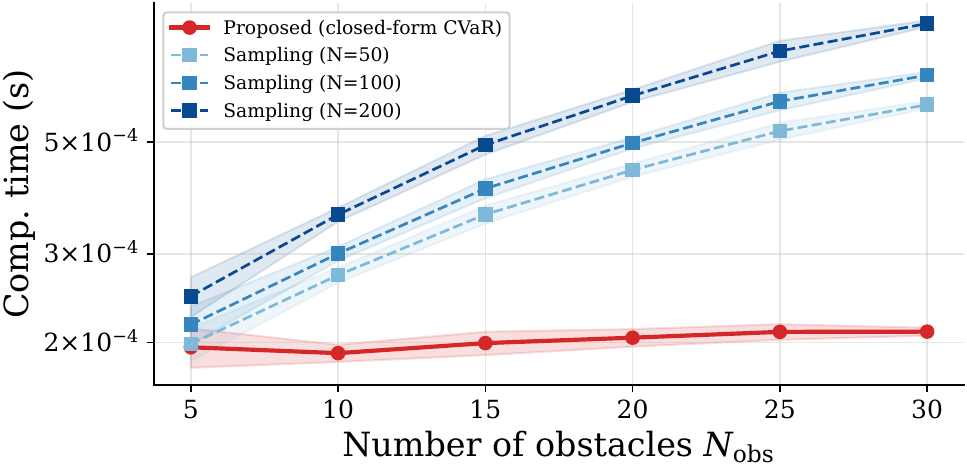}
  \caption{Computational time of the closed-form~\ac{cvar} vs.\ sampling~\ac{cvar}.}
  \label{fig:runtime_scaling}
  \vspace{-15pt}
\end{figure}

% \section*{Acknowledgment}
% xxx

\bibliographystyle{IEEEtran}
\bibliography{refs}

\end{document}